\title{On the Privacy Risks of Spiking Neural Networks: \\A Membership Inference Analysis}
\newtheorem{lemma}{Lemma}
\newtheorem{definition}{Definition}
\newtheorem{proposition}{Proposition}
\author{Junyi Guan\thanks{Equal Contribution}\thanks{Correspondence to: \texttt{junyi.guan@mbzuai.ac.ae}, \texttt{abhijith.sharma@mbzuai.ac.ae}}}
\author{Abhijith Sharma\protect\footnotemark[1]\protect\footnotemark[2]}
\author{Chong Tian}
\author{Salem Lahlou}
\affil{%
    Machine Learning Department\\
    Mohamed Bin Zayed University of Artificial Intelligence\\
    Abu Dhabi, UAE.
}
\begin{document}
\maketitle

% \begingroup                      % ① 共同作者脚注
%   \renewcommand\thefootnote{\ddagger}
%   \footnotetext{These authors contributed equally to this work.}
% \endgroup

% \begingroup                      % ② 通讯作者脚注
%   \renewcommand\thefootnote{*}
%   \footnotetext{Corresponding author: \texttt{junyi.guan@example.com}}
% \endgroup

\begin{abstract}
Spiking Neural Networks (SNNs) are increasingly explored for their energy efficiency and robustness in real-world applications, yet their privacy risks remain largely unexamined. In this work, we investigate the susceptibility of SNNs to Membership Inference Attacks (MIAs)—a major privacy threat where an adversary attempts to determine whether a given sample was part of the training dataset. While prior work suggests that SNNs may offer inherent robustness due to their discrete, event-driven nature, we find that its resilience diminishes as latency (T) increases. Furthermore, we introduce an input dropout strategy under black box setting, that significantly enhances membership inference in SNNs. Our findings challenge the assumption that SNNs are inherently more secure, and even though they are expected to be better, our results reveal that SNNs exhibit privacy vulnerabilities that are equally comparable to Artificial Neural Networks (ANNs). Our code is available at \url{https://github.com/sharmaabhijith/MIA_SNN}
\end{abstract}

\section{Introduction}

Spiking Neural Networks (SNNs) are a class of neural networks that emulate the discrete, event-driven mechanisms of biological neurons. Unlike Artificial Neural Networks (ANNs) that process continuous signals, SNNs communicate through discrete spikes, activating only when relevant information is present \citep{tavanaei2019deep}. The design of SNNs for handling complex tasks remains an active research area. However, their unique spiking operations impart certain inherent advantages across a wide range of practical applications \citep{nunes2022spiking, stan2024learning, baek2024snn}. In particular, advances in neuromorphic computing have elicited greater traction in the use of SNNs for sparse and efficient hardware implementations of neural networks \citep{yao2024spike, minhas2024continual}.

The primary adoption of SNNs is driven by two key advantages. First, the event-driven nature of spikes allows for reduced computations, leading to significant energy savings. Hence, SNNs naturally emerge as a favorable choice for energy constrainted domains, such as edge devices \citep{yan2024reconsidering}, Internet of Things (IoT) \citep{liu2022ultralow} and low-power embedded platforms \citep{syed2021exploring}. Second, SNNs have demonstrated resilience against various adversarial threats. Their discrete spike-based processing enables enhanced security by capturing only the relevant information, and obscuring less important patterns which are often exploited by adversaries \citep{nagarajan2022analysis}. 

Given these advantages, SNNs have emerged as a compelling choice for various real-world applications in recent years. Several studies have explored replacing ANN strategies with spiking dynamics, demonstrating their utility in domains such as image classification, object detection, and speech recognition \citep{yamazaki2022spiking}. Their ability to process information efficiently and robustly has led to their integration into industrial systems, enhancing performance and reliability \citep{wang2023brain, zhou2023fast}.

Simultaneously, with the rapid emergence of high-performance and efficient architectures \citep{lin2022survey, vaswani2017attention, brown2020language, sze2017efficient}, organizations worldwide are racing to adopt AI models across domains. As machine learning models are increasingly deployed in sensitive domains, data privacy has become a critical concern. Moreover, AI users are now more aware and apprehensive about potential data leaks \citep{jobin2019global}. This shift has sparked an unprecedented demand for AI service providers to safeguard the privacy of the data they manage. Unfortunately, numerous studies have exposed the vulnerabilities of current systems to privacy attacks, highlighting the urgent need for private models \citep{liu2021machine,rigaki2023survey}.

One of the most significant threats in this domain is Membership Inference Attacks (MIAs). In an MIA, an adversary aims to determine whether a specific data sample was part of a model's training set, potentially leading to privacy breaches \citep{hu2022membership}.
Studying MIAs allows researchers to identify vulnerabilities and design robust models to withstand the threat. For instance, in healthcare, ensuring that patient data cannot be inferred from a predictive model is vital for maintaining confidentiality \citep{hu2022membership}. In addition, regulating organizations can leverage MIA as a metric to conduct predeployment audits of AI models to avoid leakage of sensitive data.

% Understanding MIAs is crucial from both defensive and offensive perspectives. From a defense perspective, developing models resilient to MIAs is essential to protect sensitive information. For instance, in healthcare, ensuring that patient data cannot be inferred from a predictive model is vital for maintaining confidentiality \citep{hu2022membership}. From an attack perspective, studying MIAs allows researchers to identify vulnerabilities in machine learning models. Privacy critical organizations and governments can leverage MIA as a metric to conduct pre-deployment audits of AI models to avoid leakage of sensitive data. By understanding how these attacks are executed, more robust models can be developed to withstand such threats \citep{hu2022membership}.

Given the increasing deployment of SNNs in various applications \citep{chu2022neuromorphic, hussaini2024applications, zhou2023fast, bos2023sub}, it is imperative to assess their vulnerability to MIAs. Preliminary studies suggest that the discrete nature of SNN outputs may offer some protection against such attacks. However, existing research is limited, with only a few studies addressing this issue, often using metrics that may not fully capture the nuances of SNN behavior \citep{carlini2022membership}. Motivated by our observations, we aim to investigate: Are SNNs truly as robust to MIAs as we might hope, or do they share the same privacy risks as their counterparts that we have yet to uncover?

To address the privacy concerns of SNNs, our work systematically analyzes their resilience against MIAs through the following contributions:

\begin{itemize}
\item Develop efficient empirical estimates of state-of-the-art MIAs to simulate the online attack setting.
\item Propose a simple input dropout method to introduce stochasticity in the membership inference process, improving attack effectiveness in SNNs.
\item Analyze the impact of SNNs' latency and training methods on their vulnerability to MIAs, and also compare them with ANNs.
\item Investigate the feasibility of using ANNs to perform MIAs against SNNs, when attacker does not have the knowledge of SNNs' parameters.
\end{itemize}

To contextualize our contributions, we first review existing research on SNNs and MIAs, including concurrent work, in Section \ref{sec:related_work}. Section \ref{sec:prelim} provides a concise overview of the SNN dynamics, training, and MIA formulation. In Section \ref{sec:method}, we propose an input dropout-based strategy to efficiently evaluate SNNs against MIAs, while also comparing them with traditional ANNs. Finally, we validate our hypothesis through experiments and results in Section \ref{sec:exp_res}.

\section{Related work}
\label{sec:related_work}

\textbf{Spiking Neural Networks (SNNs)}: 
Designing and training SNNs is challenging due to their sensitivity to hyperparameters such as membrane threshold and synaptic latency, both of which significantly impact performance \citep{bouvier2019spiking}. Consequently, many existing methods focus on achieving low-latency inference with improved convergence while maintaining accuracy \citep{meng2022training}. Traditional approaches like surrogate gradient learning \citep{neftci2019surrogate} and temporal coding \citep{bellec2018long} have been further enhanced by advanced techniques \citep{dampfhoffer2023backpropagation}. Notably, \citet{deng2022temporal} introduced a gradient re-weighting mechanism to improve the temporal efficiency of SNN training.
To eliminate the need for manual threshold selection, \citet{bojkovic2024data} proposed a data-driven approach for threshold selection and potential initialization. Their method also facilitates the conversion of trained ANNs into SNNs, enabling efficient and high-performance training even in low-latency settings ($T=1,2,4$). These advancements are crucial for real-time applications. Beyond energy efficiency, SNNs have also been explored for their inherent robustness against adversarial \citep{nomura2022robustness,el2021securing} and model inversion attacks \citep{kim2022privatesnn}, further reinforcing their potential towards robust AI.

\textbf{Membership Inference Attacks: }The threat of Membership Inference Attacks (MIAs) was first demonstrated by \citet{shokri2017membership} in a simple Machine Learning-as-a-Service (MLaaS) black-box setting. Since then, extensive research has explored the privacy risks associated with diverse neural network architectures for a wide range of applications \citep{hu2022membership,yeom2018privacy,salem2019mlleaks}. Despite significant advancements and robustness characteristics of SNNs \citep{kim2022privatesnn}, their vulnerability to MIAs remains largely unclear and underexplored \citep{sharmin2019comprehensive}.

The inconsistencies in evaluation metrics and experimental settings in existing studies have made direct comparisons of MIA techniques challenging \citep{hu2023defenses}. However, \citet{carlini2022membership} presented MIA from first principles, emphasizing the importance of analyzing the Receiver Operating Characteristic (ROC) curve in attack's assessments. The ROC fully captures the tradeoff between True Positive Rate (TPR) and False Positive Rate (FPR) of the membership data across different classification thresholds. Reporting TPR under extremely low FPR conditions ($\leq$1\% and $\leq$ 0.1\%) is particularly crucial, as attackers prioritize confidently identifying members over overall accuracy. More recently, \citet{zarifzadeh2024low} proposed a state-of-the-art attack, called robust MIA (RMIA), and generalized all other existing MIAs under the umbrella of their attack formulation. RMIA also achieved highly effective attack performance with a limited number of shadow/reference models - auxiliary models trained on data with similar properties to the target model's training data.

\textbf{Concurrent Work: } While existing research primarily focuses on traditional ANNs, the membership privacy risks in SNNs remain largely unexamined. A recent study by \citet{li2024membership} explored the robustness of SNNs against MIAs, incorporating diverse experimental settings and assessing the impact of data augmentation. However, despite these contributions, the study suffers from several critical limitations. It relies on biased evaluation metrics such as balanced accuracy, which can obscure the true effectiveness of MIAs, and employs outdated training techniques for SNNs \citep{carlini2022membership}. Nowadays, in many research papers, AUC and TPR at very low FPR are the main metrics to study the performance of MIAs.
Additionally, the evaluation is conducted on simple datasets, failing to provide meaningful insights into real-world scenarios. 
More importantly, the study neglects key advancements in attack methodologies, such as RMIA, limiting the comprehensiveness of its findings. Furthermore, the analysis is restricted to timestep variations in neuromorphic datasets, lacking a systematic investigation of static datasets. These shortcomings underscore the insufficiency of existing efforts in rigorously assessing membership privacy risks in SNNs. A more sound evaluation is necessary to bridge this gap and uncover the true privacy vulnerabilities of SNNs.

\section{Preliminaries}
\label{sec:prelim}

\subsection{Spiking Neural Network}
% In this section, we will briefly explain some terminology related to SNNs, including the inference process, and mainstream training strategies: direct training, ANN-SNN conversion, and hybrid training.

\subsubsection{Data Encoding}
Unlike ANNs, SNNs have an additional temporal dimension to represent the data. This means that, in the case of image processing, the input to the network is not a single image, but is instead encoded over time into a sequence of T images, where T is known as \emph{time step} or \emph{latency}. There are various methods to encode images \citep{almomani2019comparative, wang2023comparison}, however, in our work, we use the classical \emph{constant encoding} that simply replicates the original image T times, and the resulting data is fed into the SNN.

\subsubsection{Forward Pass}

The forward process of SNN begins with encoding the input data. As the spikes are fed into the network, each neuron updates its membrane potential based on the spikes it receives. If a neuron's membrane potential exceeds a certain threshold, the neuron fires a spike and sends it to the next layer through synapses. The Equation \ref{eq:snn1} and \ref{eq:snn2} together describes the dynamics of a neuron in an SNN. 
\begin{equation}
\label{eq:snn1}
    U_{l}^{t} = \lambda U_{l}^{t-1} + w_{l}O_{l-1}^{t}
\end{equation}
\begin{equation}
\label{eq:snn2}
    O_{l}^{t}= H(U_{l}^{t}-\theta_{tr}),~~~~~~ U_{l}^{t} = U_{reset}
\end{equation}
Equation \ref{eq:snn1} represents the membrane potential \( U_{l}^{t} \) of a neuron at layer \( l \) and time \( t \). The membrane potential is updated by combining two components: the decayed potential from the previous time step \( \lambda U_{l}^{t-1} \), where \( \lambda \) is a decay factor between $0$ and $1$, and the weighted input from the previous layer's output \( w_{l}O_{l-1}^{t} \), where \( w_{l} \) is the synaptic weight and \( O_{l-1}^{t} \) is the output (spike) from the previous layer at time \( t \). If the decay factor \( \lambda \) is $1$, the neuron is called an \emph{Integrate-and-Fire (IF)} neuron; otherwise, it is a \emph{Leaky Integrate-and-Fire (LIF)} neuron.

The Equation \ref{eq:snn2} describes the neuron's output at time \( t \), where \( H \) is the Heaviside step function. If the membrane potential \( U_{l}^{t} \) exceeds a certain threshold \( \theta_{tr} \), the neuron generates an output spike (\( O_{l}^{t} = 1 \)); otherwise, no spike is generated (\( O_{l}^{t} = 0 \)). Once the neuron fires a spike, its membrane potential is reset to a specific value \( U_{reset} \), preparing the neuron for further spike processing in subsequent time steps. For the last layer $L$, the output of the network is $U_{L}^{T}$, which is the membrane potential for the final time $T$. 

\subsubsection{SNN Training}
\label{snn_training}

The training of SNNs can be categorized into two main approaches: direct backpropagation and ANN-to-SNN conversion (hybrid). However, in MIAs, the probability vector output by the neural network is the most critical feature for determining the membership status of a specific data point. Therefore, when studying the effectiveness of MIAs, we should focus more on the forward propagation process of SNNs rather than their backpropagation process.

% Training SNNs directly via backpropagation is time-consuming, so we adopt the ANN-to-SNN conversion. The two common methods are weight scaling \citep{rueckauer2017conversion,diehl2015unsupervised} and threshold scaling \citep{sengupta2019going}. In our work, threshold scaling is utilized, where the maximum of ANN activation values at each layer are set as the thresholds for each corresponding layer in the SNN. \cite{kim2022privatesnn}. Recent advances, such as \citet{bojkovic2024data}, refine the threshold and potential initialization for higher accuracy. After conversion, hybrid training \citep{rathi2020enabling} further optimizes performance, significantly improving low-latency SNNs \citep{bojkovic2024data}.

Training SNNs directly from scratch can be challenging sometimes as the performance is sensitive to hyperparameters, like membrane threshold. On the other hand, SNN converted from trained ANNs along with hybrid training can help achieve good accuracy due to better initialization of threshold from the trained ANNs \citep{kim2022privatesnn}. For example, a recent work from \citet{bojkovic2024data} shows how the activation values from the layers of a trained ANN can be utilized to initialize the membrane threshold in SNN. This method coupled with hybrid training \citep{rathi2021enabling} enables achieving good performance even for low-latency SNNs, which is beneficial for diverse real-time applications.

\subsection{Membership Inference}

In binary classification, let \( D = \{(x_i, y_i)\}_{i=1}^{n} \) be a dataset where each \( x_i \in \mathbb{R} \) represents a one-dimensional feature value, and \( y_i \in \{0,1\} \) is the class label, where \( y_i = 1 \) indicates a positive sample and \( y_i = 0 \) indicates a negative sample. A threshold-based classifier with threshold \( t \) classifies a sample as positive if \( x_i \geq t \) and negative otherwise. For a given threshold \( t \), the true positive rate (TPR) and false positive rate (FPR) is defined as shown in Equation \ref{eq:tprfpr}

\begin{equation}
\label{eq:tprfpr}
\scriptsize
    \text{TPR}(t) = \frac{\sum\limits_{i=1}^{n} \mathbf{1} (x_i \geq t, y_i=1)}{\sum\limits_{i=1}^{n} \mathbf{1} (y_i=1)}, ~~~
    \text{FPR}(t) = \frac{\sum\limits_{i=1}^{n} \mathbf{1} (x_i \geq t, y_i=0)}{\sum\limits_{i=1}^{n} \mathbf{1} (y_i=0)}
\end{equation}

The Receiver Operating Characteristic (ROC) curve is the set of points \( \{(\text{FPR}(t), \text{TPR}(t)) \mid t \in \mathbb{R} \} \), and the Area Under the Curve (AUC) is given by the integral $\int_0^1 \text{TPR}(\text{FPR}) \, d(\text{FPR}).$

\begin{definition}[Membership Inference Game] Let $\pi$ be an underlying data distribution and let $\mathcal{A}$ be a training algorithm. 
% We begin by drawing a training set $S \sim \pi^m$, and then use $\mathcal{A}$ to train a target model $\theta \leftarrow \mathcal{A}(S)$.
The game proceeds between two entities: a \emph{challenger} and an \emph{attacker} \citep{carlini2022membership,zarifzadeh2024low}
\label{def:membership-inference-game}.

\begin{enumerate}
  \item \textbf{Dataset Sampling and Model Training.}  
  The challenger samples a dataset $S$ from $\pi$, and trains a model $\theta$ using $\mathcal{A}$ on $S$. Here, $\theta$ is the target model.
  
  \item \textbf{Challenge Sample Selection.}  
  The challenger tosses a fair coin $b \in \{0,1\}$. 
  \begin{itemize}
    \item If $b = 1$, it picks a point $x$ uniformly at random from $S$. In this case, $x$ is a member of $\theta$.
    \item If $b = 0$, it draws a sample $x$ from $\pi$ (ensuring $x \notin S$). In this case, $x$ is not a member of $\theta$.
  \end{itemize}
  The challenger then provides both the target model $\theta$ and the sample $x$ to the attacker.
  
  \item \textbf{Attacker's Inference.}  
  The attacker, having access to $\theta$ (and potentially query access to the distribution $\pi$), computes a \emph{membership score} 
  \textbf{$\mathrm{Score}_{\text{MIA}}(x;\theta)$} indicating how likely it believes $x$ was contained in $S$. Based on a chosen threshold $\beta$, the attacker issues a membership decision
  \[
    \hat{b} \;\leftarrow\; \mathbf{1}\bigl[\mathrm{Score}_{\text{MIA}}(x;\theta) \;\ge\; \beta\bigr].
  \]
  Here, $\mathbf{1}[\cdot]$ is the indicator function, hence, $\hat{b} = 1$ (member) if the score exceeds the threshold, and $0$ (non-member) otherwise.

  \item \textbf{Outcome and Evaluation.}  
  The attacker's prediction $\hat{b}$ is compared against the true bit $b$. Hence, membership is inferred by the attacker if $\hat{b} = b$. By repeating this experiment over many trials, we can plot the ROC for each $\beta$. The leakage of the model is often characterized by the achievable trade-off between TPR and FPR across all possible threshold values. The attack method is considered good if it achieves a high AUC score.
\end{enumerate}
\end{definition}

\section{Methodology} \label{sec:method}
In this section, we describe our methodology for evaluating SNNs against MIA. Our attack setup follows prior research \citep{zarifzadeh2024low, carlini2022membership}. In MIAs, an adversary uses shadow models (or reference models) to replicate the target model's behavior. These reference models are trained on the target model's data distribution. If a reference model's training data contains the target query, it is an \emph{IN model}; otherwise, it is an \emph{OUT model}.

A MIA is classified as an online attack if half of the reference models are \emph{IN model}, ensuring an unbiased estimate. However, this approach is computationally expensive since \emph{IN model} must be trained for each target query. In contrast, offline attacks use only \emph{OUT model} and apply a scaling factor as a hyperparameter to approximate the online setting. This creates a trade-off between attack strength and computational cost.

% Our goal is to evaluate the performance of SNNs against MIAs. Ideally, an online MIA is preferred, as it reveals the most vulnerabilities. To achieve this efficiently, we propose the following data splitting approach: We start with a dataset \(D\), splitting it into two equal parts—one for training the target model and the other as its test set. Then, we shuffle \(D\) and iteratively split it to train \(n\) pairs of reference models, resulting in a total of \(2n\) reference models. Each reference model is trained on the entire \(D\), ensuring that every target query appears in exactly half of them. This effectively creates an online attack setting without additional model training, maintaining unbiased results. Notably, MIAs assume that attackers have knowledge of the target model's architecture when designing reference models \citep{zarifzadeh2024low}.

\begin{figure}[t]
    \centering
    \includegraphics[width=\linewidth]{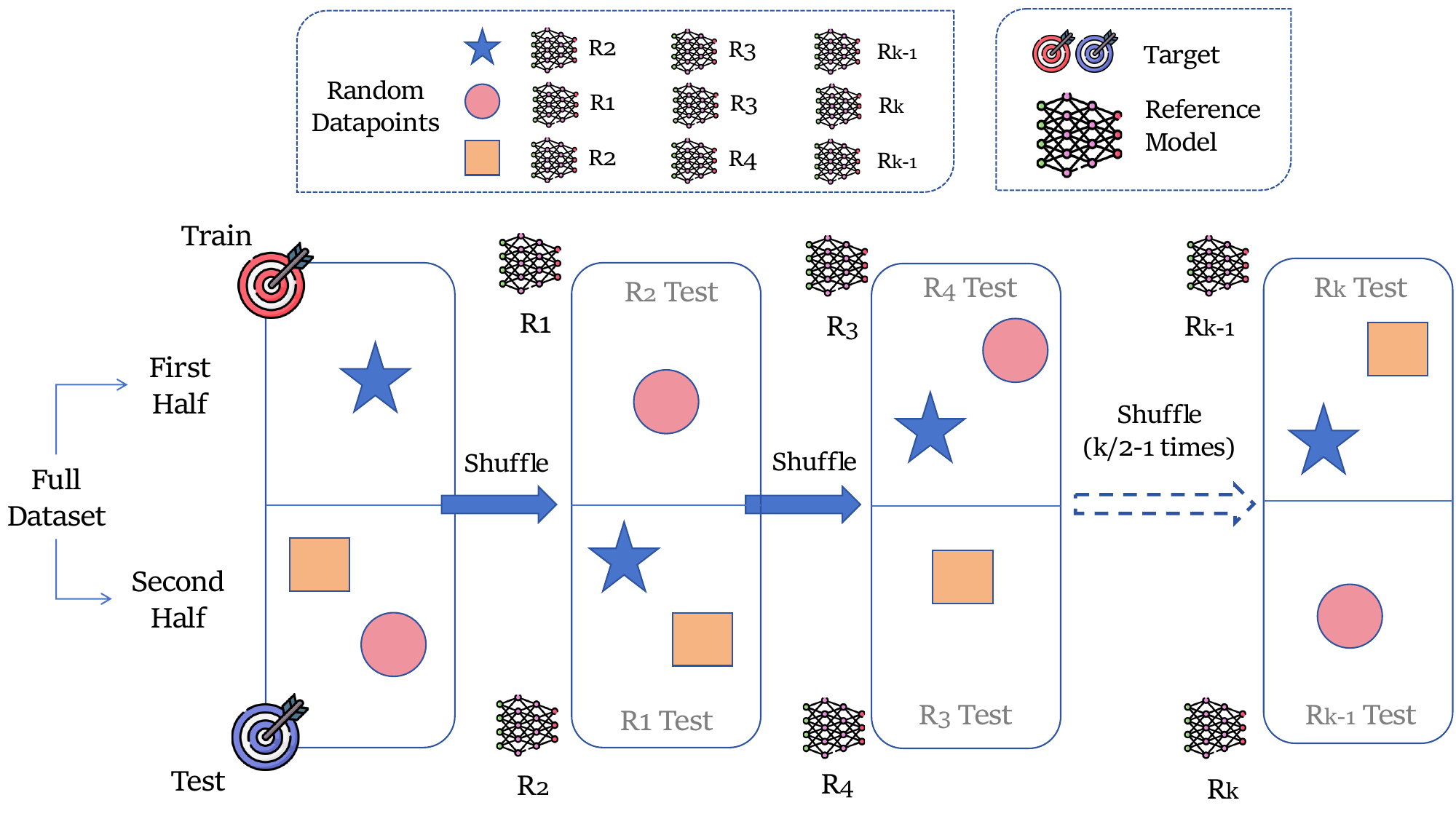}
    \caption{Illustration of Data Splitting Strategy}
    \label{fig:datasplit}
\end{figure}

To rigorously evaluate the vulnerability of SNNs to Membership Inference Attacks (MIAs), we prioritize \textbf{simulating an online attack scenario}. Online MIAs are preferred as they more realistically reflect real-world attack settings and are often more revealing of subtle vulnerabilities compared to offline approaches. To achieve this efficiently, we propose a data splitting strategy that emulates an online attack without incurring the computational cost of repeatedly retraining models. Our method begins by partitioning the dataset $D$ into two equal halves: one to train our target SNN model and the other to serve as its test set. Subsequently, we shuffle the original dataset $D$. This shuffled dataset is then iteratively divided to train $n$ pairs of reference models, resulting in a total of $2n$ reference models. Crucially, while each reference model is trained on a substantial portion of $D$ derived from this process, the data splitting is carefully designed to ensure that each query from the target model's test set is present in approximately half of the reference models. This balanced exposure is essential for unbiased MIA evaluation. 

This approach effectively simulates an online attack environment by providing a diverse set of pre-trained reference models against which to assess membership, all without the need for computationally expensive online model retraining. Consistent with standard MIA assumptions in the literature \citep{zarifzadeh2024low}, we acknowledge that our reference models are designed under the common assumption that attackers possess knowledge of the target model's architecture. Our approach is depicted in Figure \ref{fig:datasplit}, showcassing an efficient and fair data-splitting strategy tailored for online attack settings. The dataset \( D \) is initially balanced and split into training (\( Tr \)) and test (\( Ts \)) sets for the target model. By iteratively shuffling and re-splitting \( D \), a total of \( 2n \) reference models are trained. Each target query is included in exactly half of these reference models, ensuring a fair distribution and enabling robust evaluation. This method avoids redundant training and ensures computational efficiency while maintaining unbiased experimental results.

\subsection{MIA Attack Methods: Background and Modifications}

We start by describing the common MIA attacks: \emph{Attack-P, Attack-R, and RMIA} \citep{ye2022enhanced,zarifzadeh2024low} and adapt them to our experimental setting. These three attack methods exploit different assumptions about model confidence to infer the membership. The parameters for the attack methods can be defined as follows. Let $\theta$ be the target model and $\theta'$ a reference model. The audited sample is $x$, while $z$ is drawn from the dataset $D$ of size $N$. The model's confidence output is $\Pr(\cdot | \theta)$. Reference models trained with and without $x$ are denoted as $\theta'_x$ and $\theta'_{\bar{x}}$, respectively. The confidence of any model $m$, $\Pr(d | m)$, is given by the softmax score of the true label of input $d$.

Based on this setting, the original \emph{Attack-P} is defined in Table \ref{table:miadef}. Note that \emph{Attack-P} operates without reference models, assuming that if \(x\) belongs to \(\theta\)'s training data, the model's confidence in \(x\) is typically higher than for an arbitrary sample \(z\) from the overall distribution. The empirical estimation of the MIA score is given by:  

\begin{equation}  
    \frac{1}{N} \sum_{j=1}^{N} \mathbf{1}\Bigl( \Pr(x \mid \theta) \geq \Pr(z_j \mid \theta) \Bigr)  
\end{equation}  

However, since our setting provides access to the full dataset \(D\), we refine the MIA score by considering all \(x \in D\). Furthermore, by Proposition \ref{Theorem1}, we adopt \(\Pr(x | \theta)\) as the modified \emph{Attack-P} score, ensuring preservation of the ROC curve. While the weakest among the three, it provides insights into the model's confidence distribution.

% The primary aim is to utilize the whole dataset to audit and also to make the attack stronger and faster. 

% Theorem \ref{Theorem1} establishes the equivalence of our modified setting with an empirical estimation of RMIA, allowing an approximation of the strongest online RMIA. The original and modified version of the attack are summarized in Table \ref{table:miadef}

\begin{table}[h]
    \centering
    
    \renewcommand{\arraystretch}{2.3}  % Increased row spacing
    \setlength{\tabcolsep}{4pt}  % Adjust column spacing
    \rowcolors{3}{white}{teal!15}
    \scalebox{0.78}{
    \begin{tabular}{lcc}
        \toprule
        \textbf{\large Attack} & \textbf{\large Original} & \textbf{\large Modified} \\
        \midrule
        \emph{\large Attack-P} &
        $\displaystyle \Pr_z \left( \frac{\Pr(x | \theta)}{\Pr(z | \theta)} \geq 1 \right)$ &
        $\displaystyle \Pr(x | \theta)$ \\
        \emph{\large Attack-R} &
        $\displaystyle \Pr_{\theta'} \left( \frac{\Pr(x | \theta)}{\Pr(x | \theta')} \geq 1 \right)$ &
        $\displaystyle \frac{1}{2n} \sum_{\theta'} \mathbf{1} \left( \Pr(x | \theta) \geq \Pr(x | \theta') \right)$ \\
        \emph{\large RMIA} &
        $\displaystyle \Pr_z \left( \frac{\Pr(\theta | x)}{\Pr(\theta | z)} \geq 1 \right)$ &
        $\displaystyle \frac{\Pr(x | \theta)}{\frac{1}{2n} \left( \sum_{\theta'_x} \Pr(x | \theta'_x) + \sum_{\theta'_{\bar{x}}} \Pr(x | \theta'_{\bar{x}}) \right)}$ \\
        \bottomrule
    \end{tabular}
    }
    \caption{Attack formulations for computational efficiency by removing dependence on $z$.}
    \label{table:miadef}
\end{table}

\emph{Attack-R} builds on \emph{Attack-P}'s approach, but with the help of reference models. Unlike \emph{Attack-P}, the \emph{Attack-R} compares \( \theta \)'s confidence on \( x \) with that of the reference models (\( \theta' \)), assuming that the target model assigns higher confidence to samples it has seen during training. Similar to the \emph{Attack-P}'s empirically estimated score, the modified \emph{Attack-R} can be computed as shown in Table \ref{table:miadef}. 

Finally, \emph{RMIA} also known as robust-MIA is the strongest of all. It exploits the assumption that if \( \theta \) is trained on \( x \), the likelihood of \( \theta \) given \( x \) is higher than for any unrelated sample \( z \). Similar to \emph{Attack-P}, by Proposition \ref{Theorem1}, one can easily find the equivalence between the original and the modified \emph{RMIA}. 
Moreover, with the data-splitting method described above, each \(x\) is used to train an equal number of reference models with ($\theta'_x$ ) and without ($\theta'_{\bar{x}}$) it, ensuring an unbiased estimation of \(\Pr(x)\). This setup follows \citet{zarifzadeh2024low} and provides all the necessary information for an optimal online RMIA attack. Together, these attacks provide a progressive understanding of how a model's confidence can reveal membership information.

\begin{proposition} \label{Theorem1}
Let $S_1$ and $S_2$ be two MIA scoring functions:
\begin{align} \label{ine:}
    S_1(x) &= \frac{1}{N} \sum_{j=1}^{N} \mathbf{1}\Bigl( \Pr(x \mid \theta) \geq \Pr(z_j \mid \theta) \Bigr), \\
    S_2(x) &= \Pr(x \mid \theta),
\end{align}
where $\{z_j\}_{j=1}^{N}$ is a fixed set of samples. Suppose $x_1, x_2 \in D$. Then, $S_1$ and $S_2$ are equivalent, meaning that, $S_2(x_1) > S_2(x_2)$, if and only if  $S_1(x_1) > S_1(x_2)$, and their resulting ROC curves are similar.
\end{proposition}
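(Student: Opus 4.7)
The plan is to recognize $S_1$ as a monotone transform of $S_2$ and then exploit the modification described just before the proposition, namely $\{z_j\}_{j=1}^{N}=D$, to sharpen this into a strict order-preserving equivalence on $D$. First I would rewrite
\[
S_1(x) \;=\; \hat{F}\bigl(S_2(x)\bigr), \qquad \hat{F}(t) \;:=\; \frac{1}{N}\sum_{j=1}^{N}\mathbf{1}\bigl(\Pr(z_j \mid \theta) \leq t\bigr),
\]
so that $S_1$ is simply the empirical CDF of the reference scores $\{\Pr(z_j \mid \theta)\}_{j=1}^N$ evaluated at $S_2(x)$. Because $\hat{F}$ is non-decreasing, $S_2(x_1) \geq S_2(x_2)$ already yields $S_1(x_1) \geq S_1(x_2)$ for free; the real content is in promoting these weak inequalities to the strict iff claimed in the proposition.

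For the forward direction, assume $S_2(x_1) > S_2(x_2)$ and expand
\[
S_1(x_1) - S_1(x_2) \;=\; \frac{1}{N}\sum_{j=1}^{N}\Bigl[\mathbf{1}\bigl(\Pr(x_1 \mid \theta) \geq \Pr(z_j \mid \theta)\bigr) - \mathbf{1}\bigl(\Pr(x_2 \mid \theta) \geq \Pr(z_j \mid \theta)\bigr)\Bigr].
\]
Every summand lies in $\{0,1\}$, and the index $j$ with $z_j = x_1$, which exists because $x_1 \in D = \{z_j\}$, contributes exactly $1/N$ (the first indicator is $1$ since $\Pr(x_1 \mid \theta) \geq \Pr(x_1 \mid \theta)$, while the second is $0$ since $\Pr(x_2 \mid \theta) < \Pr(x_1 \mid \theta)$), so $S_1(x_1) > S_1(x_2)$. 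Conversely, if $S_1(x_1) > S_1(x_2)$ then at least one summand equals $1$, which requires some $z_j$ with $\Pr(x_2 \mid \theta) < \Pr(z_j \mid \theta) \leq \Pr(x_1 \mid \theta)$, and in particular $\Pr(x_1 \mid \theta) > \Pr(x_2 \mid \theta)$.

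For the ROC claim, I would note that because $\hat{F}$ is a well-defined function, $S_2(x_1) = S_2(x_2) \Rightarrow S_1(x_1) = S_1(x_2)$, so combined with the strict iff above, $S_1$ and $S_2$ induce identical rankings (including ties) on $D$. Sweeping a threshold $\beta$ on $S_2$ and the matching threshold $\hat{F}(\beta)$ on $S_1$ therefore produces the same partition of $D$ into predicted members and non-members at every step, so the two ROC step functions coincide. The main obstacle is the forward direction of the strict iff: the argument critically relies on $\{z_j\} = D$, since without that convention $\hat{F}$ could remain constant on the interval $[S_2(x_2), S_2(x_1)]$ and collapse distinct $S_2$ values to a common $S_1$ value, destroying strict monotonicity and with it the iff.
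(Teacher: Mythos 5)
Your proof is correct and follows the same overall skeleton as the paper's (strict order preservation in both directions, then ROC equality from rank preservation), but your route through the forward direction is genuinely tighter and fixes a gap the paper leaves open. The paper defines $A_x = \{z_j : \Pr(x\mid\theta) \ge \Pr(z_j\mid\theta)\}$ and, from $\Pr(x_1\mid\theta) > \Pr(x_2\mid\theta)$, simply asserts the \emph{strict} inclusion $A_{x_2} \subset A_{x_1}$; but strictness requires a witness $z_j$ with $\Pr(x_2\mid\theta) < \Pr(z_j\mid\theta) \le \Pr(x_1\mid\theta)$, which the paper never exhibits. You supply exactly that witness by invoking $x_1 \in D = \{z_j\}$, and you correctly flag that without this convention $\hat{F}$ can be flat on $[S_2(x_2), S_2(x_1)]$ and the strict implication fails --- so the hypothesis $x_1, x_2 \in D$ is doing real work that the paper's write-up obscures. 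Your converse direction (a positive sum forces some summand to be $+1$, hence a $z_j$ strictly between the two scores) is logically the same as the paper's ``only way for this to happen'' step, just made explicit. Your framing of $S_1 = \hat{F} \circ S_2$ with $\hat{F}$ the empirical CDF of the reference scores is a cleaner decomposition than the paper's set-cardinality argument: it gives the weak monotonicity for free and isolates strictness as the only nontrivial point. For the ROC claim you reprove, in compressed form, what the paper delegates to its Lemma 1, and you additionally handle ties ($S_2(x_1)=S_2(x_2) \Rightarrow S_1(x_1)=S_1(x_2)$), which the paper's lemma implicitly assumes. In short: same architecture, but your version is the one that actually closes the argument.
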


\subsection{Dropout-Enhanced MIA Method: Introducing Prediction Stochasticity} 

Numerous existing works have demonstrated the robustness of SNNs against adversaries due to their discrete spiking behavior \citep{kim2022privatesnn,sharmin2019comprehensive}. Similarly, we expect MIAs to be ineffective, especially on low-latency Spiking SNNs (also validated by our results of Section \ref{sec:results}). It is not because SNNs fail to remember the training data, but because their discrete representations cause member and non-member samples to overlap. Interestingly, we observe that MIA and the out-of-distribution (OOD) detection task share the same principle: 
%OOD detection involves identifying inputs that lie outside the distribution on which a model was trained. Hence, 
the model exhibits higher confidence on in-distribution data (members) compared to the out-of-distribution data (non-members) \citep{hendrycks17baseline}. Fundamentally, both methods are related to epistemic uncertainty, which arises due to lack of knowledge or data, and can be reduced with additional information or improved modeling \citep{der2009aleatory,lahlou2023deup}. 

In practice, dropout is a widely used and effective technique for regularization \citep{srivastava2014dropout}, uncertainty estimation \citep{gal2016dropout}, \citep{sun2023efficient}, and OOD detection \citep{hendrycks17baseline, nguyen2022out}. Techniques like Monte Carlo (MC)-Dropout \citep{gal2016dropout} introduce stochasticity during inference, with variations in output softmax probabilities approximating model uncertainty. In the context of MIA, directly applying MC-Dropout is not feasible, as it requires access to model parameters, which is unavailable in a black-box setting. To address this, we propose using input dropout, as an approximation of MC-Dropout (in the first layer), introducing stochasticity to the predictions without requiring access to the model's internals. As evidenced by our experiments, low-latency SNNs often struggle to differentiate subtle distinctions between member and non-member data. By perturbing the input through dropout, we aim to reduce the confidence of non-member data while maintaining the relative confidence of member data, thus enhancing the separation and improving the MIA's effectiveness.

\paragraph{Similarities between weight and input dropout:} Consider a simple single-layered model given by $f_W: \mathbb{R}^d \to \mathbb{R}, \quad f_W(X) = \varphi(W^T \cdot X)$, where $W \in \mathbb{R}^d$ is the weight vector, $X \in \mathbb{R}^d$ represents the input and $\varphi$ is the non-linear activation. During dropout, a Bernoulli mask \( M \in \mathbb{R}^d \) with probability \( (1 - p) \) modifies the input and weight as \( \tilde{X} = M \odot X \) and \( \tilde{W} = M \odot W \), where \( \odot \) denotes the Hadamard product. The equivalence between the expected outputs of input dropout $\mathbb{E}_M[f_W(\tilde{X})]$ and weight dropout $\mathbb{E}_M[f_{\tilde{W}}(X)]$ is shown in Equation \ref{eq: input_drop}:
\begin{equation}
\label{eq: input_drop}
\small
\centering
\begin{split}
    \mathbb{E}_M[f_W(\tilde{X})] = \mathbb{E}_M \left[ \varphi \left( W^T \cdot (M \odot X) \right) \right] \\ 
    = \mathbb{E}_M \left[\varphi\left( \sum_i w_i (M_i x_i) \right)\right]
    = \mathbb{E}_M \left[\varphi\left( \sum_i (M_i w_i) x_i \right)\right] \\ 
    = \mathbb{E}_M \left[ \varphi \left( (M \odot W)^T \cdot X \right) \right] = \mathbb{E}_M[f_{\tilde{W}}(X)]
\end{split}
\end{equation}

However, in case of multi-layered networks, weight dropout is applied to each layer individually, which differentiates it from input dropout. Nevertheless, both methods are capable of introducing stochasticity into the model's predictions. Hence, based on our hypothesis, instead of modifying the internal layers, we randomly drop parts of the input data. For each forward pass \( i \in \{1, \dots, N\} \), a binary mask with elements sampled from the Bernoulli (p) mask inputs: $\tilde{X
} = X \odot M$ is fed into the model \( \theta \). We then compute the confidence \( \Pr(\tilde{X} \mid \theta) \) for each pass, and the final estimate is obtained by averaging over \( n \) passes:

\begin{equation}
    \Pr(X \mid \theta) = \frac{1}{N} \sum_{i=1}^{N} \Pr(\tilde{X}_i \mid \theta).
\end{equation}

This method enhances existing MIA techniques by incorporating the mean confidence value, which can be optimized for different attack strategies. When selecting the hyperparameters, we use the following approach: we randomly consider one of the reference models as the target model and use the other reference models to attack it. Then, we find the optimal hyperparameters \( p \) and \( N \)  by grid search to maximize the attack's AUC. Finally, we use the hyperparameters selected from the reference models to attack the original target model.

Some MIA methods also incorporate input noise \citep{carlini2022membership,zarifzadeh2024low}. However, these methods do not directly estimate the confidence term \( \Pr(X \mid \theta) \); they rely on other components of the MIA metric. Our focus here is not to compare various uncertainty estimation methods but rather to evaluate whether input dropout can significantly enhance MIA performance on SNNs. Additionally, by estimating the confidence term with input dropout, we can extend this approach to all attack methods that require confidence estimation.

\section{Experiments and Results}
\label{sec:exp_res}

%%%%%%%%%%%%%%%%%%%%%%%%%%%  Main Results Modified  %%%%%%%%%%%%%%%%%%%%%%%%%%%
\begin{table*}[h!]
\centering
\caption{MIA Results with SNN (left) and ANN (right) ResNet18 on CIFAR-10 and CIFAR-100.}

\begin{minipage}[h]{0.7\textwidth}
\label{tab:mia_results_corrected}
\centering
\footnotesize
\setlength{\tabcolsep}{4.5pt}
\scalebox{0.9}{
\begin{tabular}{%
  % >{\centering\arraybackslash}m{0.4cm} 
  % @{ \hspace{-2mm} }              
  % >{\centering\arraybackslash}m{1cm} 
  % \; l \; ccc \; ccc \; ccc}      
>{\centering\arraybackslash}m{0.4cm} 
  @{\hspace{-2mm}}              
  >{\centering\arraybackslash}m{1cm} 
  @{\hspace{6pt}}l@{\hspace{6pt}}ccc@{\hspace{6pt}}ccc@{\hspace{6pt}}ccc}     
\toprule
 & \textbf{Drop} & \textbf{Attack} 
& \multicolumn{3}{c}{\textbf{SNN (T=1)}} 
& \multicolumn{3}{c}{\textbf{SNN (T=2)}} 
& \multicolumn{3}{c}{\textbf{SNN (T=4)}} \\ 
\cmidrule(lr){4-6} \cmidrule(lr){7-9} \cmidrule(lr){10-12}
 & \textbf{Out} &  & AUC & 0.1\% & 1\% 
& AUC & 0.1\% & 1\% 
& AUC & 0.1\% & 1\%  \\ 
\midrule
%------------------ CIFAR-10 ------------------%
\multirow{6}{*}{\rotatebox{90}{\textbf{CIFAR-10}}}%
% --- Without Dropout ---
& \multirow{3}{*}{\rotatebox{90}{\textbf{False}}}
& Attack-P 
& 54.58 & 0.08 & 0.87 
& 54.90 & 0.00 & 0.72 
& 55.34 & 0.00 & 0.00 \\
&  
& Attack-R 
& 57.59 & 0.00 & 0.00 
& 58.22 & 0.00 & 0.00 
& 59.14 & 0.00 & 0.00 \\
&  
& {\cellcolor{teal!15}}RMIA 
& {\cellcolor{teal!15}}59.89 & {\cellcolor{teal!15}}0.84 & {\cellcolor{teal!15}}3.82 
& {\cellcolor{teal!15}}60.83 & {\cellcolor{teal!15}}1.00 & {\cellcolor{teal!15}}4.43 
& {\cellcolor{teal!15}}62.61 & {\cellcolor{teal!15}}1.09 & {\cellcolor{teal!15}}5.01 \\
\cmidrule(lr){2-12}
% --- With Dropout ---
& \multirow{3}{*}{\rotatebox{90}{\textbf{True}}}
& Attack-P 
& 54.64 & 0.06 & 0.84 
& 54.75 & 0.05 & 0.83 
& 55.08 & 0.00 & 0.70 \\
&  
& Attack-R 
& 59.54 & 0.00 & 0.00 
& 59.99 & 0.00 & 0.00 
& 60.21 & 0.00 & 0.00 \\
&  
& {\cellcolor{teal!15}}RMIA 
& {\cellcolor{teal!15}}\textbf{63.84} & {\cellcolor{teal!15}}\textbf{1.86} & {\cellcolor{teal!15}}\textbf{6.34} 
& {\cellcolor{teal!15}}\textbf{64.28} & {\cellcolor{teal!15}}\textbf{2.03} & {\cellcolor{teal!15}}\textbf{6.23} 
& {\cellcolor{teal!15}}\textbf{64.65} & {\cellcolor{teal!15}}\textbf{1.94} & {\cellcolor{teal!15}}\textbf{6.61} \\
\midrule
%------------------ CIFAR-100 ------------------%
\multirow{6}{*}{\rotatebox{90}{\textbf{CIFAR-100}}}%
% --- Without Dropout ---
& \multirow{3}{*}{\rotatebox{90}{\textbf{False}}}
& Attack-P 
& 59.47 & 0.14 & 1.24 
& 61.67 & 0.12 & 1.13 
& 64.85 & 0.12 & 1.13 \\
&  
& Attack-R 
& 66.25 & 0.00 & 0.00 
& 69.58 & 0.00 & 0.00 
& 73.23 & 0.00 & 0.00 \\
&  
& {\cellcolor{teal!15}}RMIA 
& {\cellcolor{teal!15}}69.06 & {\cellcolor{teal!15}}1.11 & {\cellcolor{teal!15}}6.29 
& {\cellcolor{teal!15}}72.81 & {\cellcolor{teal!15}}1.87 & {\cellcolor{teal!15}}8.24 
& {\cellcolor{teal!15}}77.57 & {\cellcolor{teal!15}}2.61 & {\cellcolor{teal!15}}12.01 \\
\cmidrule(lr){2-12}
% --- With Dropout ---
& \multirow{3}{*}{\rotatebox{90}{\textbf{True}}}
& Attack-P 
& 59.70 & 0.14 & 1.20 
& 61.85 & 0.13 & 1.30 
& 64.82 & 0.12 & 1.21 \\
&  
& Attack-R 
& 71.69 & 0.00 & 0.00 
& 73.64 & 0.00 & 0.00 
& 75.47 & 0.00 & 0.00 \\
&  
& {\cellcolor{teal!15}}RMIA 
& {\cellcolor{teal!15}}\textbf{75.82} & {\cellcolor{teal!15}}\textbf{2.80} & {\cellcolor{teal!15}}\textbf{11.14} 
& {\cellcolor{teal!15}}\textbf{78.31} & {\cellcolor{teal!15}}\textbf{3.55} & {\cellcolor{teal!15}}\textbf{13.06} 
& {\cellcolor{teal!15}}\textbf{80.76} & {\cellcolor{teal!15}}\textbf{4.36} & {\cellcolor{teal!15}}\textbf{16.16} \\
\bottomrule
\end{tabular}
}
\end{minipage}%
\begin{minipage}[h]{0.28\textwidth}
\centering
\footnotesize
\setlength{\tabcolsep}{3.8pt}
\renewcommand{\arraystretch}{1.5}
\scalebox{0.9}{
\begin{tabular}{@{}c c c c c@{}}
\toprule
& \textbf{Attack} & \multicolumn{3}{c}{\textbf{ANN}}\\
\cmidrule(lr){3-5}
 &  & AUC & 0.1\% & 1\% \\
\midrule
\multirow{3}{*}{\rotatebox{90}{\textbf{CIFAR-10}}} & Attack-P & 56.61 & 0.00 & 0.00 \\
 & Attack-R & 59.49 & 0.00 & 0.00 \\
 & {\cellcolor{teal!15}}RMIA & {\cellcolor{teal!15}}\textbf{63.84} & {\cellcolor{teal!15}}\textbf{2.84} & {\cellcolor{teal!15}}\textbf{5.82}  \\
\midrule
\multirow{3}{*}{\rotatebox{90}{\textbf{CIFAR-100}}} & Attack-P &  68.75 & 0.10 & 1.22 \\
 & Attack-R  & 76.93 & 0.00 & 0.00 \\
 & {\cellcolor{teal!15}}RMIA & {\cellcolor{teal!15}}\textbf{82.51} & {\cellcolor{teal!15}}\textbf{6.75} & {\cellcolor{teal!15}}\textbf{20.25} \\
\bottomrule
\end{tabular}
}
\end{minipage}
\end{table*}

\begin{figure*}[t]
    \begin{minipage}{0.3\textwidth}
        \centering
        \includegraphics[width=\textwidth]{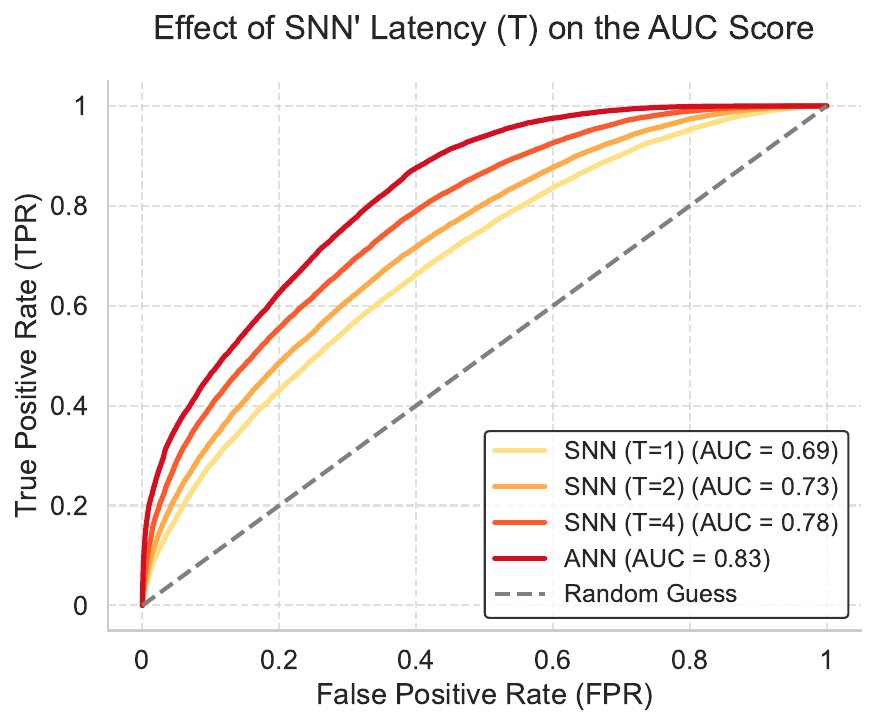}
        \label{fig:plot1}
    \end{minipage}%
    \hspace{0.2cm} % Space between the plots
    \begin{minipage}{0.38\textwidth}
        \centering
        \includegraphics[width=\textwidth]{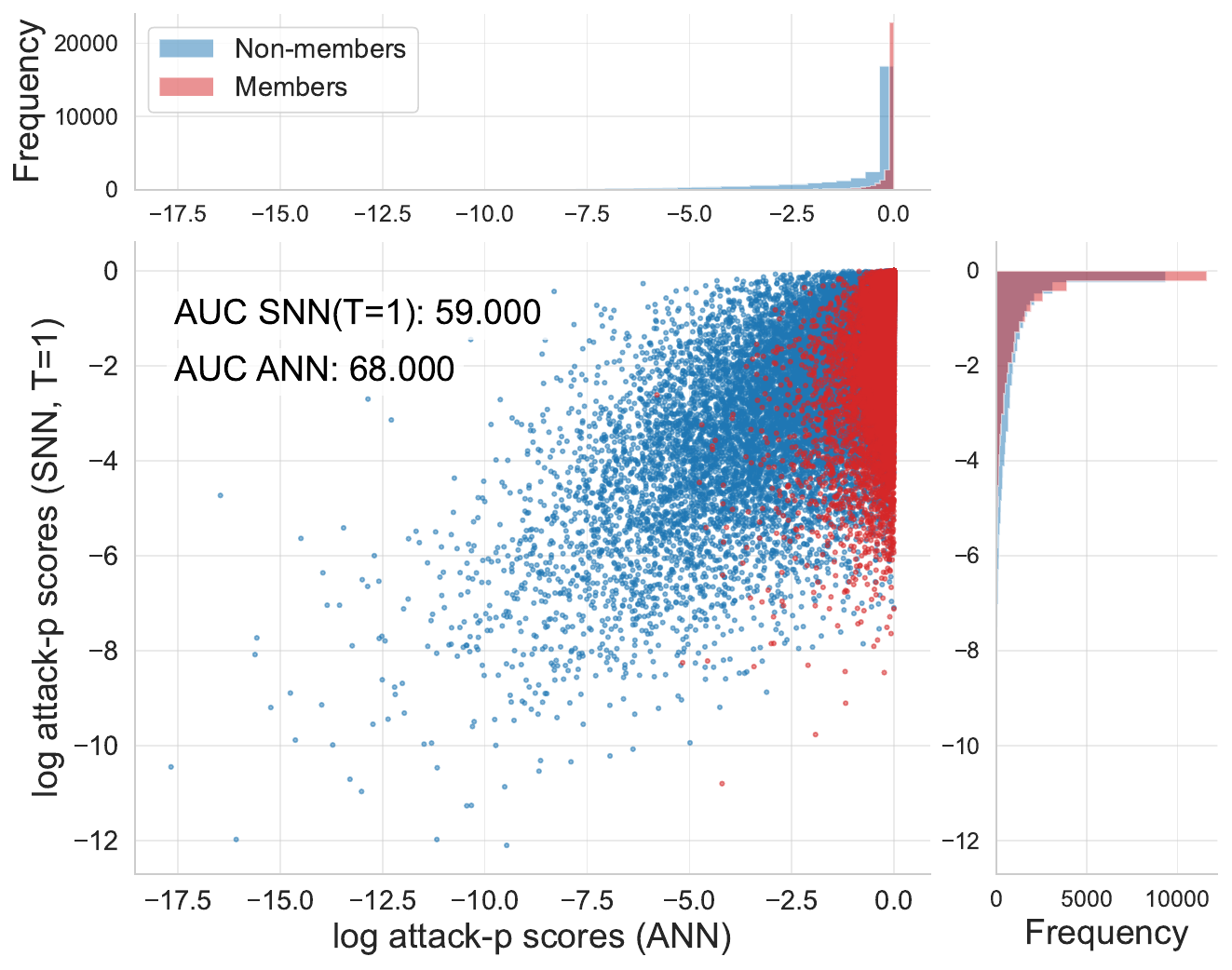}
        \label{fig:plot2}
    \end{minipage}%
    \hspace{0.2cm} % Space between the plots
    \begin{minipage}{0.3\textwidth}
        \includegraphics[width=\textwidth]{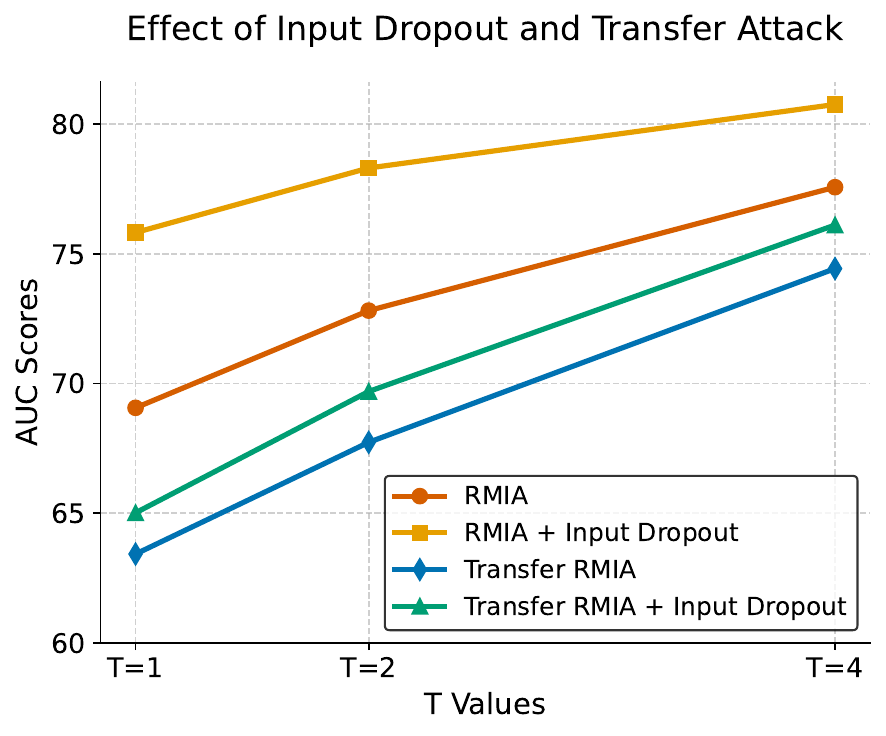}
        \label{fig:plot3}
    \end{minipage}
    
    \caption{ \textbf{Left}: ROC curves and AUC values for all-latency SNNs and an ANN under RMIA on CIFAR-100. \textbf{Middle}: A scatter plot showing the log attack p-scores and distributions for member and non-member data on SNN (T=1) and ANN for CIFAR-100. \textbf{Right}: RMIA performance for all-latency SNNs under different conditions: with or without input dropout, and using or not using a transfer model (ANN).}
    \label{fig:all_plots}
\end{figure*}

The sections describe the detailed outline of our experimental setup and discussion of our results. 

\subsection{Model Training}

\subsubsection{Hybrid Training via ANN-to-SNN Conversion}

In this approach, we first train an ANN and subsequently convert it into SNN using the threshold extraction and fine-tuning method proposed by \citet{bojkovic2024data}. This conversion facilitates efficient SNN training while maintaining high performance, even under low-latency constraints. We consider both CIFAR-10 and CIFAR-100 datasets \citep{krizhevsky2009learning}, where CIFAR-10 represents a relatively simpler classification task, whereas CIFAR-100 presents a more challenging scenario due to its increased class diversity. To ensure a comprehensive evaluation, we train ResNet18 \citep{he2016deep} on both the datasets. 

All models are trained for 200 epochs. For CIFAR-10, we initialize the learning rate at 0.1, while for CIFAR-100, we use 0.02. Optimization is performed using the stochastic gradient descent (SGD) algorithm with a weight decay of $5 \times 10^{-4}$ and a momentum parameter of 0.1. Once the ANN training is complete, we convert the models into SNNs. Following \citet{bojkovic2024data}, we begin by training the converted SNN models at $T = 1$ for 50 epochs using surrogate gradient learning (with a learning rate of $5 \times 10^{-3}$). Higher latency SNN models ($T = 2, 3, 4$) are then obtained through sequential training, where the weights of the SNN model at latency $T$ are initialized from the previously trained SNN at $T - 1$. Each additional latency level is trained for 30 epochs. Table \ref{tab:hybrid_vs_direct_training} shows the test accuracy of trained models.

\begin{table}[h]
\centering
\caption{AUC results for the SNN model (left) and ANN model (right) on CIFAR-10.}
\label{tab:SNN_ANN_combined}

\begin{minipage}{0.68\linewidth}
\centering
\footnotesize
\setlength{\tabcolsep}{6pt}
\renewcommand{\arraystretch}{1.3}
\scalebox{0.86}{
\begin{tabular}{@{}c c c c c@{}}
\toprule
\textbf{Dropout} & \textbf{Attack} & \textbf{T=1} & \textbf{T=2} & \textbf{T=4} \\
\midrule
\multirow{3}{*}{\textbf{False}} 
& Attack-P & 55.13 & 55.57 & 55.95 \\
& Attack-R & 58.33 & 59.90 & 60.25 \\
& \cellcolor{teal!15}RMIA & \cellcolor{teal!15}61.44 & \cellcolor{teal!15}61.97 & \cellcolor{teal!15}63.06 \\
\midrule
\multirow{3}{*}{\textbf{True}} 
& Attack-P & 55.98 & 56.01 & 56.22 \\
& Attack-R & 60.97 & 60.73 & 61.07 \\
& \cellcolor{teal!15}RMIA & \cellcolor{teal!15}\textbf{64.79} & \cellcolor{teal!15}\textbf{65.12} & \cellcolor{teal!15}\textbf{65.36} \\
\bottomrule
\end{tabular}
}
\end{minipage}
\hfill
\begin{minipage}{0.30\linewidth}
\vspace{1pt}
\centering
\footnotesize
\setlength{\tabcolsep}{4.5pt}
\renewcommand{\arraystretch}{1.5}
\scalebox{0.85}{
\begin{tabular}{@{}c c@{}}
\toprule
\textbf{Attack} & \textbf{AUC} \\
\midrule
Attack-P & 56.61 \\
Attack-R & 59.49 \\
\cellcolor{teal!15}RMIA & \cellcolor{teal!15}\textbf{63.84} \\
\bottomrule
\end{tabular}
}
\end{minipage}
\end{table}

To demonstrate the generalizability of our method beyond ResNet, we also provide a brief set of experimental results using both Spiking Neural Networks (SNNs) and Artificial Neural Networks (ANNs). Table~\ref{tab:SNN_ANN_combined} reports the AUC scores for an SNN and ANN model based on VGG-16 architecture on the CIFAR-10 dataset, highlighting the impact of incorporating dropout and varying the number of time steps.

\begin{table*}[t!]
\centering
\caption{MIA results on SNNs with varying latencies (T=1,2,4) using ANNs as reference models, with and without dropout}
\label{tab:ann_to_attack_snn}
\footnotesize
\renewcommand{\arraystretch}{1.2}
\setlength{\tabcolsep}{4pt}
\scalebox{0.88}{
\begin{tabular}{@{}c c l ccc ccc ccc ccc ccc ccc@{}}
\toprule
& & \multicolumn{9}{c}{\textbf{Dropout = False}} & \multicolumn{9}{c}{\textbf{Dropout = True}} \\
\cmidrule(lr){3-11} \cmidrule(lr){12-20}
\textbf{Data} & \textbf{Attack} & \multicolumn{3}{c}{\textbf{SNN(T=1)}} & \multicolumn{3}{c}{\textbf{SNN(T=2)}} & \multicolumn{3}{c}{\textbf{SNN(T=4)}} & \multicolumn{3}{c}{\textbf{SNN(T=1)}} & \multicolumn{3}{c}{\textbf{SNN(T=2)}} & \multicolumn{3}{c}{\textbf{SNN(T=4)}} \\
\cmidrule(lr){3-5}\cmidrule(lr){6-8}\cmidrule(lr){9-11}\cmidrule(lr){12-14}\cmidrule(lr){15-17}\cmidrule(lr){18-20}
& & AUC & 0.1\% & 1\% & AUC & 0.1\% & 1\% & AUC & 0.1\% & 1\% & AUC & 0.1\% & 1\% & AUC & 0.1\% & 1\% & AUC & 0.1\% & 1\% \\
\midrule

%================ CIFAR-10 ================%
\textbf{CIFAR}
  & \textbf{Attack-R}
  & 53.26 & 0.00 & 1.74 & 53.88 & 0.00 & 0.49 & 55.04 & 0.00 & 0.00 
  & 53.31 & 0.25 & 1.56 & 54.10 & 0.00 & 0.50 & 55.50 & 0.00 & 0.00 \\
\textbf{10} & {\cellcolor{teal!15}}\textbf{RMIA}
  & {\cellcolor{teal!15}}58.52 & {\cellcolor{teal!15}}0.69 & {\cellcolor{teal!15}}3.40 
  & {\cellcolor{teal!15}}59.68 & {\cellcolor{teal!15}}\textbf{0.81} & {\cellcolor{teal!15}}3.86 
  & {\cellcolor{teal!15}}61.32 & {\cellcolor{teal!15}}\textbf{1.05} & {\cellcolor{teal!15}}4.55 
  & {\cellcolor{teal!15}}\textbf{59.32} & {\cellcolor{teal!15}}\textbf{0.74} & {\cellcolor{teal!15}}\textbf{3.77} 
  & {\cellcolor{teal!15}}\textbf{60.52} & {\cellcolor{teal!15}}0.76 & {\cellcolor{teal!15}}\textbf{4.34} 
  & {\cellcolor{teal!15}}\textbf{62.29} & {\cellcolor{teal!15}}0.89 & {\cellcolor{teal!15}}\textbf{4.98} \\

\midrule

%================ CIFAR-100 ================%
\textbf{CIFAR}
  & \textbf{Attack-R}
  & 58.99 & 0.00 & 0.80 & 62.71 & 0.00 & 1.89 & 67.98 & 0.00 & 0.00 
  & 59.41 & 0.00 & 0.49 & 63.31 & 0.00 & 1.40 & 68.58 & 0.00 & 0.00 \\
\textbf{100} & {\cellcolor{teal!15}}\textbf{RMIA}
  & {\cellcolor{teal!15}}63.42 & {\cellcolor{teal!15}}\textbf{0.62} & {\cellcolor{teal!15}}3.85 
  & {\cellcolor{teal!15}}67.73 & {\cellcolor{teal!15}}\textbf{1.02} & {\cellcolor{teal!15}}\textbf{6.29} 
  & {\cellcolor{teal!15}}74.43 & {\cellcolor{teal!15}}\textbf{1.37} & {\cellcolor{teal!15}}9.65 
  & {\cellcolor{teal!15}}\textbf{65.01} & {\cellcolor{teal!15}}0.38 & {\cellcolor{teal!15}}\textbf{3.89} 
  & {\cellcolor{teal!15}}\textbf{69.69} & {\cellcolor{teal!15}}0.72 & {\cellcolor{teal!15}}6.13 
  & {\cellcolor{teal!15}}\textbf{76.11} & {\cellcolor{teal!15}}1.10 & {\cellcolor{teal!15}}\textbf{10.39} \\

\bottomrule
\end{tabular}
}
\end{table*}

%%%%%%%%%%%%%%%%%%%%%%%%%%%  ATTACK ON DIRECT TRAINED SNN  %%%%%%%%%%%%%%%%%%%%%%%%%%%
\begin{table}[t]
\centering
\caption{Directly trained SNN Attack Results (T=4).}
\label{tab:snn_t4_results}
\renewcommand{\arraystretch}{1.2}
\footnotesize
\setlength{\tabcolsep}{4pt}
\scalebox{0.9}{
\begin{tabular}{@{}c c l ccc@{}}
\toprule
\textbf{Dataset} & \textbf{Dropout} & \textbf{Attack} 
& AUC & 0.1\% & 1\% \\
\midrule

%================ CIFAR-10 ================%
\multirow{6}{*}{\textbf{CIFAR-10}}
  % --- Dropout False ---
  & \multirow{3}{*}{\textbf{False}}
    & Attack-P 
    & 59.76 & 0.05 & 1.05 \\
  &
    & Attack-R 
    & 63.53 & 0.00 & 0.00 \\
  &
    & {\cellcolor{teal!15}}RMIA 
    & {\cellcolor{teal!15}}63.85 & {\cellcolor{teal!15}}1.04 & {\cellcolor{teal!15}}5.76 \\
\cmidrule(lr){2-6}
  % --- Dropout True ---
  & \multirow{3}{*}{\textbf{True}}
    & Attack-P 
    & 58.87 & 0.06 & 0.96 \\
  &
    & Attack-R 
    & 64.64 & 0.00 & 0.00 \\
  &
    & {\cellcolor{teal!15}}RMIA 
    & {\cellcolor{teal!15}}\textbf{67.58} & {\cellcolor{teal!15}}\textbf{3.56} & {\cellcolor{teal!15}}\textbf{8.86} \\
\midrule
%================ CIFAR-100 ================%
\multirow{6}{*}{\textbf{CIFAR-100}}
  % --- Dropout False ---
  & \multirow{3}{*}{\textbf{False}}
    & Attack-P 
    & 56.54 & 0.00 & 0.74 \\
  &
    & Attack-R 
    & 56.15 & 0.00 & 0.00 \\
  &
    & {\cellcolor{teal!15}}RMIA 
    & {\cellcolor{teal!15}}59.62 & {\cellcolor{teal!15}}0.44 & {\cellcolor{teal!15}}2.39 \\
\cmidrule(lr){2-6}
  % --- Dropout True ---
  & \multirow{3}{*}{\textbf{True}}
    & Attack-P 
    & 56.58 & 0.00 & 1.06 \\
  &
    & Attack-R 
    & 57.65 & 0.00 & 0.00 \\
  &
    & {\cellcolor{teal!15}}RMIA 
    & {\cellcolor{teal!15}}\textbf{62.11} & {\cellcolor{teal!15}}\textbf{0.62} & {\cellcolor{teal!15}}\textbf{3.13} \\
\bottomrule
\end{tabular}
}
\end{table}

\subsubsection{Direct SNN Training}

To validate the findings obtained from our hybrid training approach, we conduct an additional set of experiments using directly trained SNNs, following the methodology proposed in \citet{mukhoty2023direct}. For this experiment also, we focus on the CIFAR-10 and CIFAR-100 dataset and train a ResNet18 SNN model. The model is trained for 250 epochs using the Adam optimizer with a learning rate of 0.001. We evaluate the SNN at $T=4$ to assess its impact on privacy risks. Unlike the sequential training strategy employed in the hybrid approach, direct SNN training allows for parallel training of models at different latencies, as each latency level is independently optimized. To ensure fair comparison, we apply the same data preprocessing and transformations as in the hybrid training setup. All training and experiments are conducted on NVIDIA A100 GPUs.

\begin{table}[H]
    \centering
    \small
    \setlength{\tabcolsep}{3pt}
    \renewcommand{\arraystretch}{1.2}
    \caption{ResNet18 classification test accuracy (\%) for ANN-to-SNN converted models and directly trained SNNs.}
    \label{tab:hybrid_vs_direct_training}
    \scalebox{0.79}{
    \begin{tabular}{l c c c c c c}
        \toprule
         & \multicolumn{3}{c}{\textbf{Hybrid}} & \multicolumn{1}{c}{\textbf{Direct}} & {\cellcolor{teal!15}}\textbf{ANN} \\
        \cmidrule(lr){2-4} \cmidrule(lr){5-5} \cmidrule(lr){6-6} 
        \textbf{Dataset} & \textbf{SNN (T=1)} & \textbf{SNN (T=2)} & \textbf{SNN (T=4)} & \textbf{SNN (T=4)} & {\cellcolor{teal!15}}\textbf{--}\\
        \midrule
        \textbf{CIFAR-10}  & 86.1 & 87.3 & 88.2 & 87.0 & {\cellcolor{teal!15}} \textbf{90.9}\\
        \midrule
        \textbf{CIFAR-100} & 59.5 & 62.1 & 63.9 & 66.9 & {\cellcolor{teal!15}} \textbf{69.5}\\
        \bottomrule
    \end{tabular}
    }
\end{table}

\label{sec:results}

\subsection{SNN Performance against MIA} 

% Tables \ref{tab:mia_results_corrected} and Figure \ref{fig:all_plots} \textbf{Left} present the performance of trained ANNs and their corresponding SNNs (\(T=1, 2, 4\)) on CIFAR-10 and CIFAR-100 against MIAs. Our results can be summarized in three key points. First, RMIA is the most effective attack, followed by Attack-R and then Attack-P, which aligns with previous findings \citep{zarifzadeh2024low}. Second, for SNNs, all attack metrics (AUC and TPR at low FPR) increase with \(T\), indicating their progressive vulnerability to MIAs as their latency increases. Finally, ANNs consistently exhibit the highest attack metric scores, demonstrating that they are inherently more susceptible to MIAs than SNNs.

% The robustness of SNNs at lower \(T\) stems from their inherent discrete signal processing. Unlike ANNs, which handle continuous inputs and outputs, SNNs encode inputs as binary spike sequences over \(T\) time steps. When \(T\) is small \(T=1\), each neuron outputs a single binary value, significantly limiting expressiveness. This leads to overlapping confidence distributions for member and non-member data (Fig. \ref{fig:all_plots}, \textbf{Middle}), reducing MIA effectiveness. As \(T\) increases, SNNs capture more details, improving separation (Fig. \ref{app:forhis}), and can therefore be more easily attacked. Note that there exists an accuracy-privacy trade-off: lower \(T\) reduces accuracy slightly but improves privacy, while higher \(T\) improves accuracy at the cost of privacy (see Table \ref{tab:hybrid_vs_direct_training})

Tables \ref{tab:mia_results_corrected} and Figure \ref{fig:all_plots} (\textbf{Left}) summarize the performance of trained ANNs and their corresponding SNNs ($T=1,2,4$) on CIFAR-10 and CIFAR-100 when subjected to MIAs. Our findings highlight three key insights:

\textbf{Effectiveness of Attacks:} RMIA consistently outperforms both Attack-R and Attack-P, confirming its superiority as a membership inference technique. This aligns with prior studies \citep{zarifzadeh2024low}, reinforcing RMIA's superior privacy threat in ANNs and SNNs.

\textbf{Impact of Latency on SNN Vulnerability:} The privacy risk of SNNs increases with latency ($T$), across all attack metrics. As the T increases, the SNN behaves similar to an ANN and hence, the difference in membership scores of member and non-member data widens.

\textbf{ANNs vs. SNNs in MIA Susceptibility:} ANNs consistently exhibit the highest attack metric scores, indicating greater susceptibility to MIAs compared to SNNs. This suggests that SNNs offer an intrinsic advantage in privacy protection, particularly at lower latencies.

\textbf{Comparing directly trained SNNs:} Table \ref{tab:snn_t4_results} highlights the MIA threat to directly trained SNNs (T=4), showing AUC trends similar to hybrid training. This confirms that SNN vulnerability to MIA is independent of the training method.

The resilience of SNNs at low $T$ stems from their discrete processing. Unlike ANNs, which use continuous values, SNNs encode information as spike trains over time steps. When $T=1$, each neuron outputs a single binary value, significantly limiting expressiveness and leading to overlapping confidence distributions for member and non-member data. This overlap reduces the attacker's ability to distinguish between them, thereby mitigating MIA effectiveness (see Fig. \ref{fig:all_plots}, \textbf{Middle}). However, as $T$ increases, SNNs accumulate more information over time, improving expressiveness and decision boundary separation (Fig. \ref{app:forhis}), which in turn makes them more vulnerable to MIAs. This reveals an accuracy-privacy trade-off: lower $T$ enhances privacy but slightly reduces accuracy, while higher $T$ improves accuracy at the cost of privacy (Table \ref{tab:hybrid_vs_direct_training}).

\subsection{Attacking SNN Using ANN as Reference Models}

SNNs consists of critical hyperparameters, such as latency (T) and membrane threshold, which significantly influence their performance. Unlike ANNs, these parameters are not easily accessible to an adversary, adding an inherent layer of complexity for the attack. Consequently, an attacker may resort to using a conventionally trained ANN with the same architecture as a reference model to infer membership on the target SNN. As expected, our experiments indicate that this approach is less effective (Fig. \ref{fig:all_plots}, \textbf{Right}).

Although there are architectural similarities, the confidence distributions of ANNs and SNNs exhibit notable differences, particularly in how they separate member and non-member samples (Fig. \ref{fig:all_plots}, \textbf{Middle}). Despite this, we observe a pattern similar to that of the same-model scenario in Table \ref{tab:ann_to_attack_snn}. Furthermore, the trends in attack metrics for RMIA and Attack-R across both datasets closely mirror those in the same-model setting. At lower SNN latencies, the confidence scores for member and non-member data become increasingly indistinguishable compared to ANNs (Fig. \ref{app:forhis}). 

\begin{figure}[h!]
    \centering
    \includegraphics[scale=0.27]{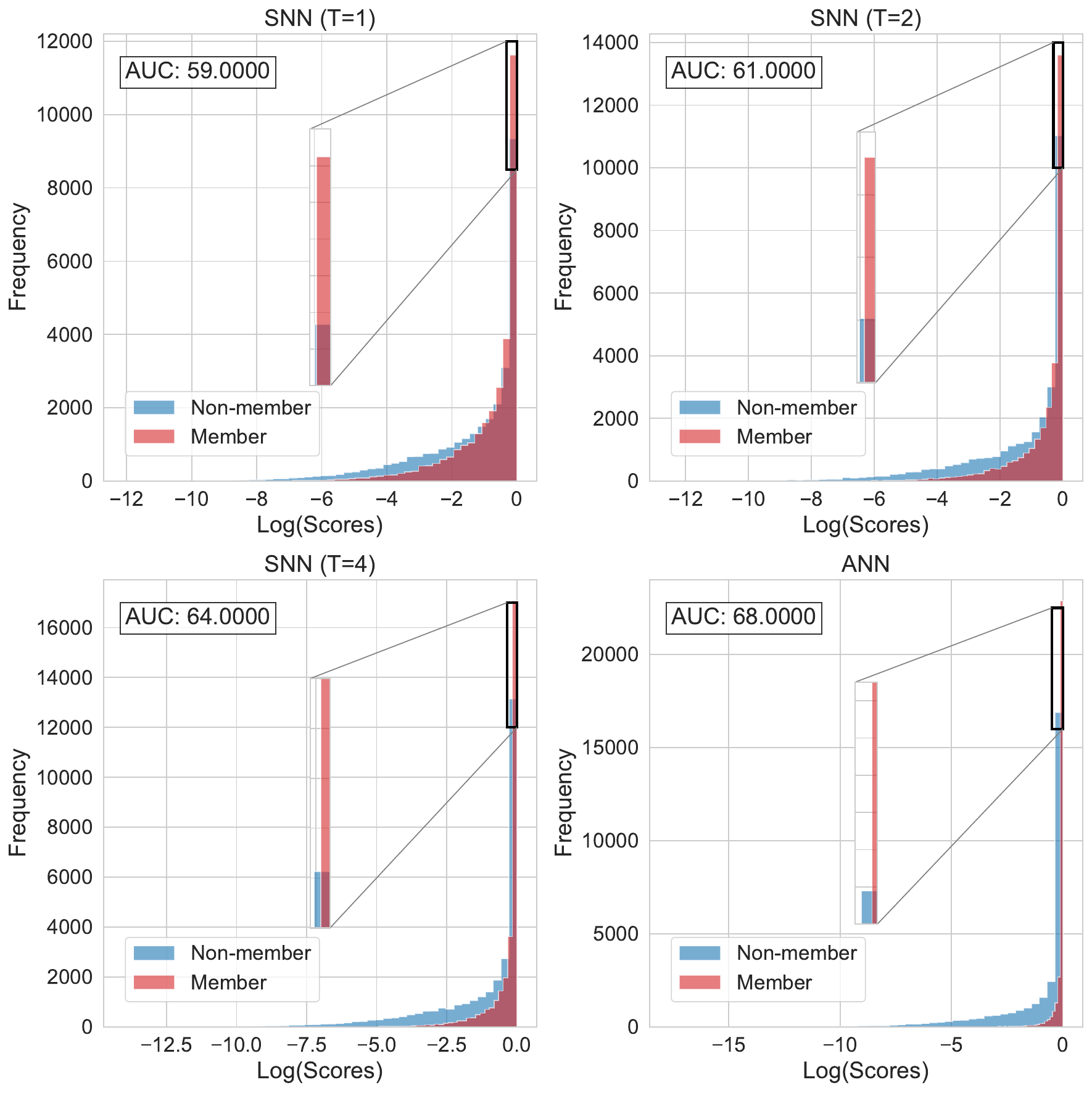}  
    \caption{%These four histograms represent the 
    Attack-P(confidence) distribution of member and non-member for all latency SNNs \& ANNs in CIFAR-100}
    \label{app:forhis}
\end{figure}

\subsection{Impact of Input Dropout on Attack Performance}

To enhance the attack on SNNs, we apply the input dropout method. For both CIFAR-10 and CIFAR-100, our experiments demonstrate that incorporating input dropout significantly increases the overall attack performance. We observe that the vulnerability of SNNs becomes quite similar to that of ANNs for all attacks as the input dropout strategy is adopted. This result is clearly evident for the most potent RMIA attack on both datasets. 

Interestingly, the AUC improvement also occurs even when the adversary uses ANNs as reference models to attack the target SNN model, as shown in Table \ref{tab:ann_to_attack_snn} and Figure \ref{fig:all_plots}. This highlights the transferability of our hypothesis regarding SNN vulnerabilities, reinforcing the concern that adversarial threats can persist even without direct knowledge of the model's structure. Furthermore, similar enhancements in attack performance are also evident for directly trained SNNs, particularly for low-latency configurations as shown in Table \ref{tab:snn_t4_results}. This shows that the technique adopted to train SNNs is inconsequential to the MIA threat and our input dropout strategy, highlighting the universal susceptibility of SNN to the privacy threat.

\section{Conclusion}
\label{sec:conc}
Our results show that SNNs exhibit greater resilience to MIAs than ANNs. However, it becomes increasingly vulnerable as the latency increases. Furthermore, our proposed naive input dropout technique can significantly increase the risk of MIA, even when the adversary lacks knowledge of the SNN's architecture and uses ANN as reference models. This effect also persists in SNNs trained directly. Therefore, we argue that the assumption of SNNs' inherent privacy protection should be critically evaluated, especially for sensitive tasks. Our work also highlights the need for robust techniques to enhance privacy assurance in SNNs.

\begin{acknowledgements}
The authors express their sincere gratitude to Velibor Bojkovic, Ruben Solozabal, and Xingyu Qu for their valuable insights and constructive feedback, which greatly contributed to the improvement of this work.
\end{acknowledgements}

% References
\bibliography{uai2025-template}

\newpage

\onecolumn

\title{On the Privacy Risks of Spiking Neural Networks: \\A Membership Inference Analysis\\(Supplementary Material)}
\maketitle

\appendix
\section{Proofs}

\textbf{Proposition 1: }\emph{
Let $S_1$ and $S_2$ be two MIA scoring functions:
\begin{align} \label{ine:}
    S_1(x) &= \frac{1}{N} \sum_{j=1}^{N} \mathbf{1}\Bigl( \Pr(x \mid \theta) \geq \Pr(z_j \mid \theta) \Bigr), \\
    S_2(x) &= \Pr(x \mid \theta),
\end{align}
where $\{z_j\}_{j=1}^{N}$ is a fixed set of samples. Suppose $x_1, x_2 \in D$. Then, $S_1$ and $S_2$ are equivalent, meaning that, $S_2(x_1) > S_2(x_2)$, if and only if  $S_1(x_1) > S_1(x_2)$, and their resulting ROC curves are similar.
}

\begin{proof}
Assume $S_2(x_1) > S_2(x_2)$, meaning
\begin{equation}
    \Pr(x_1 \mid \theta) > \Pr(x_2 \mid \theta).
\end{equation}
Define the indicator set:
\begin{equation}
    A_x = \{ z_j \mid \Pr(x \mid \theta) \geq \Pr(z_j \mid \theta) \}.
\end{equation}
Since $\Pr(x_1 \mid \theta) > \Pr(x_2 \mid \theta)$, we have $A_{x_2} \subset A_{x_1}$. Hence,
\begin{equation}
    S_1(x_1) = \frac{|A_{x_1}|}{N} > \frac{|A_{x_2}|}{N} = S_1(x_2).
\end{equation}
This establishes the forward order-preserving property.

\textbf{Reverse direction:} Assume $S_1(x_1) > S_1(x_2)$. This implies
\begin{equation}
    \frac{|A_{x_1}|}{N} > \frac{|A_{x_2}|}{N}.
\end{equation}
Since $A_x$ is defined based on comparisons with $\Pr(x \mid \theta)$, if $S_1(x_1) > S_1(x_2)$, then $x_1$ ranks strictly higher than $x_2$ in terms of how often it exceeds reference points $z_j$. The only way for this to happen is if
\begin{equation}
    \Pr(x_1 \mid \theta) > \Pr(x_2 \mid \theta),
\end{equation}
which implies $S_2(x_1) > S_2(x_2)$. Since both directions hold, we conclude
\begin{equation}
    S_2(x_1) > S_2(x_2) \iff S_1(x_1) > S_1(x_2).
\end{equation}
Hence the $S_1$ and $S_2$ are equivalent. By Lemma \ref{theorem2}, the ROC for them are the same 
\end{proof}

\begin{lemma}
    \label{theorem2}
    Let $S_1$ and $S_2$ be two equivalent scoring functions, meaning that they satisfy, for every $x_1, x_2$: 
    \begin{equation}
    S_2(x_1) > S_2(x_2) \iff S_1(x_1) > S_1(x_2).
\end{equation}
Then the ROC curves for $S_1$ and $S_2$ on the same dataset are identical.
\end{lemma}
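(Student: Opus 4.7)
The plan is to show that the ROC curve depends only on the ordering that the scoring function induces on the dataset, not on the specific numerical score values. Since $S_1$ and $S_2$ are order-equivalent by hypothesis, they induce the same ranking of samples and therefore trace out the same ROC curve.

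First I would strengthen the stated hypothesis to also preserve ties: $S_2(x_1) = S_2(x_2)$ if and only if $S_1(x_1) = S_1(x_2)$. This follows from the contrapositive of the strict-inequality iff given in the lemma, since any strict inequality on one side would force the same strict inequality on the other, ruling out the case of equality under one scoring function and strict inequality under the other.

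Next I would recast the ROC curve combinatorially. For any scoring function $S$ and threshold $t$, Equation \ref{eq:tprfpr} shows that $\text{TPR}(t)$ and $\text{FPR}(t)$ depend on $t$ only through the positive prediction set $P_S(t) = \{x_i : S(x_i) \geq t\}$. The task therefore reduces to proving that the two collections $\{P_{S_1}(t_1) : t_1 \in \mathbb{R}\}$ and $\{P_{S_2}(t_2) : t_2 \in \mathbb{R}\}$ coincide as families of subsets of the dataset. Given any $t_1 \in \mathbb{R}$, I would define $t_2 = \min_{x \in P_{S_1}(t_1)} S_2(x)$ (taking $t_2 = +\infty$ when the set is empty) and use the combined order-plus-tie equivalence to verify that $P_{S_2}(t_2) = P_{S_1}(t_1)$: every $x \in P_{S_1}(t_1)$ satisfies $S_2(x) \geq t_2$ by construction, while any $y \notin P_{S_1}(t_1)$ has $S_1(y) < S_1(x)$ for some $x$ achieving the minimum, which by order-preservation forces $S_2(y) < S_2(x) = t_2$. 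A symmetric construction yields the reverse inclusion, and the conclusion follows by applying Equation \ref{eq:tprfpr} to matching positive prediction sets.

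The main obstacle is handling ties correctly: without tie-preservation, two samples sharing an $S_1$ score could in principle have distinct $S_2$ scores, so no single $t_2$ would produce the same level set, and the sweep of $S_2$-thresholds could break or merge points on the ROC curve relative to $S_1$. The preliminary tie-preservation step removes this issue, after which the remainder of the argument is a routine bookkeeping exercise on finitely many level sets.
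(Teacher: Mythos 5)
Your proposal is correct and follows essentially the same route as the paper's proof: both arguments reduce the claim to showing that every positive prediction set (equivalently, every classification outcome) achievable by thresholding $S_1$ is also achievable by thresholding $S_2$, so the two score functions sweep out the same $(\text{FPR},\text{TPR})$ pairs. The only difference is one of rigor rather than strategy --- the paper merely asserts the existence of a corresponding threshold $s$ for each $t$, whereas you explicitly construct it as $t_2=\min_{x\in P_{S_1}(t_1)}S_2(x)$ and verify the level-set equality (and note the tie-preservation point the paper leaves implicit).
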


\begin{proof}
    Since $S_1$ and $S_2$ preserve the same ranking of data points, their ordering in terms of classification thresholds remains unchanged. That is, for any threshold $t$ applied to $S_2$, there exists a corresponding threshold $s$ applied to $S_1$ such that the classification outcomes remain identical:
\begin{equation}
    \hat{y}_i = \begin{cases} 1, & S_2(x_i) \geq t \\ 0, & S_2(x_i) < t \end{cases} \iff \hat{y}_i = \begin{cases} 1, & S_1(x_i) \geq s \\ 0, & S_1(x_i) < s. \end{cases}
\end{equation}
Since both functions yield the same classification outcomes for all possible threshold values, they result in the same TPR and FPR values at each threshold:
\begin{equation}
    \text{TPR}_{S_1}(s) = \text{TPR}_{S_2}(t), ~~~~   \text{FPR}_{S_1}(s) = \text{FPR}_{S_2}(t).
\end{equation}
Since the ROC curve is defined as the parametric plot of $(\text{FPR}(t), \text{TPR}(t))$, it follows that:
\begin{equation}
    \text{ROC}_{S_1} = \text{ROC}_{S_2}.
\end{equation}
\end{proof}

\end{document}